\newtheorem{theorem}{Theorem}
\newtheorem{lemma}{Lemma}
\newcommand{\zap}[1]{}
\newcommand{\betav}{\pmb{\beta}}
\newcommand{\lambdav}{\pmb{\lambda}}
\newcommand{\alphav}{\pmb{\alpha}}
\newcommand{\zerov}{\pmb{0}}
\newcommand{\wv}{\mathbf{w}}
\newcommand{\ev}{\mathbf{e}}
\begin{document}
\twocolumn[

\aistatstitle{Bayes-Optimal Effort Allocation in Crowdsourcing: Bounds and Index Policies}

\aistatsauthor{ Weici Hu \\ wh343@cornell.edu \And Peter I. Frazier \\ pf98@cornell.edu}

\aistatsaddress{School of Operations Research \& Information Engineering\\  Cornell University \\ 206 Rhodes Hall\\Ithaca, NY 14853, USA } ]

\begin{abstract}
We consider effort allocation in crowdsourcing, where we wish to assign labeling tasks to imperfect homogeneous crowd workers to maximize overall accuracy in a continuous-time Bayesian setting, subject to budget and time constraints. The Bayes-optimal policy for this problem is the solution to a partially observable Markov decision process, but the curse of dimensionality renders the computation infeasible. Based on the Lagrangian Relaxation technique in \cite{adel2008}, we provide a computationally tractable instance-specific upper bound on the value of this Bayes-optimal policy, which can in turn be used to bound the optimality gap of any other sub-optimal policy. In an approach similar in spirit to the Whittle index for restless multi-armed bandits, we provide an index policy for effort allocation in crowdsourcing and demonstrate numerically that it outperforms other state-of-arts and performs close to optimal solution.
\end{abstract}

\section{Introduction}
Crowdsourcing can accomplish large-volume tasks such as image classification or document relevance assessment by using large pool of amateur workers at much less expense than is possible by hiring experts or by developing an automatic machine learning method \citep{KargerNips}.
Moreover, online platforms such as Amazon Mechanical Turk make crowdsourcing service widely accessible by providing a marketplace in which requesters may post tasks, which crowd-workers may complete in exchange for money.
These factors are making crowdsourcing increasingly important.

Although crowdsourcing is less expensive than hiring experts, the number of images or other tasks that a requester can correctly label or process is nonetheless limited by his or her budget.  This fact is compounded by the noise and variability inherent to crowd-workers' responses, which typically requires a single item to be processed independently several times by multiple workers.

In this paper, our goal is to find a sequential allocation of workers to tasks that most accurately supports a correct aggregated label for each task,
subject to a limited budget (which in turn limits the number of workers that a requester can hire) and a limited time horizon.  In this paper we focus on binary labeling tasks, but our approach can also be extended to multi-class labeling.

Intuitively, much can be accomplished through a sophisticated allocation of worker effort:
When budgets are large relative to the overall difficulty of the tasks to be accomplished, a good scheme should allocating more workers to those tasks that are more difficult, so that uniform quality can be ensured.
When budgets are small, however, those most difficult tasks should be abandoned so that the bulk of the budget can be used to ensure that at least those easy tasks are done correctly.

We adopt a Bayesian approach, which is natural in crowdsourcing because:
1) It allows us to leverage prior information about the tasks to be accomplished, which may be learned in the crowdsourcing setting from features associated with each task and the typically large collections of historical data collected in previous crowdsourcing campaigns;
2) It seeks to maximize average-case performance with respect to the prior distribution, which is natural in crowdsourcing where requesters typically tolerate some variability in quality, and are most interested in maximizing aggregate performance across a large volume of tasks, rather than ensuring robustness to some worst-case distribution over task characteristics, or studying asymptotic behaviors that do not become relevant until the number of workers working on each task grows large.

Within this Bayesian framework, we formulate and study sequential effort allocation as a partially observable Markov decision process, using tools from dynamic programming.  While the curse of dimensionality \citep{Po07} prevents solving this dynamic program to optimality, we provide a computationally tractable upper bound on the expected performance under any Bayes-optimal effort allocation policy. Upper bounds are useful because they allow evaluating the optimality gap for any given heuristic on any problem instance, simply by simulating the heuristic and comparing its performance to the bound. The technique we use to obtain such upper bound is the Lagrangian Relaxation on weakly coupled dynamic programs discussed in \cite{adel2008} and \cite{Hawkins2003}. The proofs we present in Section \ref{sec:bound} are very similar in spirit to \cite{adel2008}, but while Adelman based his proof on the value functions of the DP formulation in a infinite horizon setting, we offer a proof based on the initial objective function of the problem in a finite horizon setting. Nonetheless, our crowdsourcing model is a specific application of the more general formulation in \cite{adel2008} and \cite{Hawkins2003}. Then, using Lagrange multipliers that appear in this upper bound, we derive an index-based heuristic policy that is similar in spirit to the Gittins index policy for multi-armed bandits \cite{Gi89} and the Whittle index policy for restless bandits \cite{Wh88}. We then show that this index policy has performance close to the upper bound in numerical experiments, and also outperforms other state-of-art policies for resource allocation .

Although the primary novelty and contribution of our paper is that it is the first to characterize the performance of the Bayes-optimal policy for effort allocation in crowdsourcing, and to develop Bayesian bandit-style index policies, our work is also novel is modeling the {\it asynchronous} nature of crowd-work in a continuous-time setting, in contrast with previous work on effort allocation in crowdsourcing that assumed instant completion of tasks \citep{YanIcml2011}, \citep{qihangxichen2014}, \citep{kargerShah}. This model is inspired by how crowd-workers are employed on Amazon Mechanical Turk; allowing an asynchronous process thus gives a closer proximity to the real situations.

\section{Related Work}\label{sec:preWork}
There are two major strands of former works to which our work is related. The first is the work on effort allocation and crowd labeling. Much of this work adopts a frequentist viewpoint and focuses on error bounds for inference \cite{KargerAcm, GhoshAcm, KargerNips, ThanhAcm, hoIcml2013}. \cite{KargerAcm} proposed an allocation algorithm based on a random graph, and while its performance asymptotically order-optimal, one needs a very large number of workers to make this relevant. \cite{ThanhAcm} incorporates a limited budget, but lacks the notion of optimality. None of the work above considers a finite time horizon.
There is also work with more of a Bayesian flavor(\cite{YanIcml2011, BachrachIcml2012}). While they focused on the efficiency of allocation, they did not consider an optimal solution. Among the work that adopt a Bayesian framework, our work is similar to \cite{XichenQihangArch} in that we both form an optimal policy in the form of a stochastic dynamic program. Although they also provide a well-motivated heuristic policy, our work pushes further by deriving an upper bound based on this formulation of optimal policy.

The second strand resides in the literature of Multi-armed bandit (MAB) and stochastic dynamic programming. The formulation a Bayesian-optimal procedure as a dynamic program is considered in \cite{Lo91,Mo82}. Our use of Lagrangian relaxation is an application of the relaxation method of weakly coupled dynamic program discussed in \citep{adel2008}. The setting in this paper differs from the previous works by that only one task is to be assigned when a worker enters and the completion of task is not instant. The index-based policy proposed in this paper, which uses Lagrangian Multipliers to assign indices, draws inspiration from \cite{Wh88}.

\section{Problem Statement}\label{sec:model}
We consider a requester of crowdsourcing service with $K$ independent binary labeling tasks.
Due to a budget constraint, the requester allows a maximum of $U$ workers to work on these tasks,
and requires all work to be completed by a time horizon $T$.
We model the arrival of workers to the crowdsourcing system by a Poisson process with rate $r$.
(Our model can be generalized to non-homogeneous Poisson processes with little additional effort.)

As each worker enters the system, the requester selects one of the $K$ tasks for the worker to label.
We let $z_\ell\in\{1,\ldots,K\}$ indicate the task assigned to the $\ell^{\mathrm{th}}$ worker. (We use $[K]$ to denote $\{1,\ldots,K\}$ for the rest of the paper.)
The worker spends a random $\mathrm{Exponential}(\mu)$ amount of time on the task $x$, independent of all else, and then provides a binary label $y_\ell$.

Workers do not always give the correct label because the task may be ambiguous and thus hard to categorize, or workers may be careless or lack background information when they conduct the labeling process.
We suppose that workers are ``homogeneous" (a term used in \cite{qihangxichen2014}), and give noisy but unbiased labels.
More specifically, each task $x$ has an associated unknown value $\theta_x\in[0,1]$, which is the underlying probability that it will be labeled as positive by a worker.  The distribution of the label generated by the $\ell^{th}$ worker given $\theta_1,...\theta_K$ and $z_\ell$ is
\begin{equation}
\label{eq:likelihood}
y_\ell|\theta_{1:K},z_\ell \sim \mathrm{Bernoulli}(\theta_{z_\ell}).
\end{equation}

We set a known threshold value $d_x$, and consider the label for task $x$ being positive if $\theta_x>d_x$. We let $B = \{x : \theta_x > d_x\}$ be the set of tasks whose correct label is positive. Note $B$ is unknown as $\theta_x$ are unknown.

For analytical convenience we use the Beta distribution, which is the conjugate prior of the Bernoulli distribution, as the prior for each $\theta_x$ independent across all $x$.
\begin{equation*}
\theta_x \sim \text{Beta}(\alpha_{0,x},\beta_{0,x}).
\end{equation*}
With the assumption of this independent beta prior on each $\theta_x$, and the conditionally independent Bernoulli responses as in \eqref{eq:likelihood}, the posterior on $\theta_x$ after some number of workers have provided responses will remain beta-distributed, with first parameter equal to the sum of $\alpha_{0,x}$ and the number of positive responses, and the second parameter equal to the sum of $\beta_{0,x}$ and the number of negative responses.
In practice, one can estimate appropriate values for the parameters $\alpha_{0,x}$ and $\beta_{0,x}$ from historical data on tasks previously labeled by the crowd. We discuss this further in section \ref{sec:comp} where numerical experiments are performed.

Note the assumption of a Beta distribution can be relaxed without a great deal of difficulty, as the posterior distribution will remain in an exponential family parameterized by the number of positive and negative labels observed for the instance.
The assumption of independence cannot be easily generalized, as it is necessary for the decomposition in our Lagrangian relaxation, without which the upper bound in Section~\ref{sec:bound} much more challenging to compute.

Thus, after the worker budget $U$ has been exhausted or the time horizon $T$ has elapsed, the requester will have a posterior distribution on each $\theta_x$ which remains beta-distributed. Let $\alpha'_x$, $\beta'_x$ be the posterior parameter for this time.  At this time, we model the requester as choosing, for each task $x$, an estimated label based on the responses of the crowd-workers, and then receiving a reward of $1$ for each correctly labeled task, and $0$ for the incorrectly labeled tasks.
(Our approach can be easily generalized to other reward or loss structures that are additive across tasks, and depend only on $\theta_x$ and some task-specific estimate based on the crowd's feedback.)

The expected reward under the posterior that the requester will obtain is 
$\mathbb{P}(\theta_x > d_x | \alpha'_x,\beta'_x)$, 
if s/he chooses a positive label, 
and $\mathbb{P}(\theta_x < d_x | \alpha'_x,\beta'_x)$ if s/he chooses a negative label ($\theta_x$ has a density, and so $\theta_x=d_x$ with a posterior probability of $0$).
Thus, the requester chooses the label giving the larger reward, and achieves a reward whose expected value under the posterior is,
\begin{equation*}
R(\alpha'_x,\beta'_x) = \max\Big\{\mathbb{P}[\theta_x\!>\!d_x | \alpha'_{x},\beta'_{x} ],
\mathbb{P} [\theta_x\!<\!d_x | \alpha'_{x},\beta'_{x}]\Big\},
\end{equation*}
Across all tasks, the requester's expected reward under the posterior is
\begin{equation}
R(\alphav',\betav') = \sum_{x=1}^K R(\alpha'_x,\beta'_x),
\label{eq:maxReward}
\end{equation}
where $\alphav'=(\alpha'_x : x\in[K])$ and similarly for $\betav'$.

The goal of the requester is to design a policy to dynamically assign tasks to workers entering the system so as to maximize the expected reward received, based on the labels obtained from the crowd-workers.

\section{Dynamic Programming Formulation} \label{sec:DP}

We now formalize the problem statement from Section~\ref{sec:model} as control of a continuous-time Markov chain, which can be analyzed through a stochastic dynamic program built on the embedded discrete-time Markov chain. This continuous-time Markov chain tracks the evolution of worker assignments and posterior distributions on $\theta_x$ that results from a requester's dynamic assignment policy.

The state of this continuous-time Markov chain contains:
\begin{itemize}
\item
length-$K$ vectors $\alphav = (\alpha_1,\ldots,\alpha_K)$ and $\betav = (\beta_1,\ldots,\beta_K)$ that will describe the posterior distribution on each $\theta_x$ given the labels observed thus far ($\theta_x$ will be distributed according to $\text{Beta}(\alpha_x,\beta_x)$ under this posterior).
\item a length-$K$ vector $\wv = (w_1,...,w_K)$ that tracks the number of workers currently working on each task.
\item an integer $\ell$ that tracks the number of workers that have entered the system and been assigned to tasks (but not necessarily completed them).
\item the time $t$ of the most recent {\it event}, either a worker completing a task, or a worker arriving.
\end{itemize}
We indicate such a generic state by $s= (\alphav, \betav, t, \wv, l)$ and let $\mathbb{S} = \mathbb{R}^K\times \mathbb{R}^K\times \mathbb{R} \times \mathbb{N}^K \times \mathbb{N}$ be the set of possible values this state can take.
We let $\alphav(s)$, $\betav(s)$, $t(s)$, $\wv(s)$ and $\ell(s)$ all indicate the corresponding components of $s$.

Transitions occur in this Markov chain when workers complete tasks, and when workers arrive to start work on a task.
We use $n$ to count the number transitions (or ``events''), we let $S_n \in \mathbb{S}$ indicate the state just after the $n^{th}$ event, for $n\ge1$.   The initial state is $S_0 = (\alphav_0,\betav_0, 0, \zerov, 0)$, where $\alphav_0 = (\alpha_{0,x} : x\in[K])$ and $\betav_0 = (\beta_{0,x} : x\in[K])$ together describe the prior distribution, and $\zerov$ is a vector of $K$ zeros.

We let $\Delta_n$ denote the time duration between event $n$ and $n+1$, i.e., $\Delta_n = t(S_{n+1})-t(S_n)$.  Then, $\Delta_n|S_n \sim \text{Exp}(\mu\sum_{x=1}^Kw_x(S_n)+r)$.

We define a policy $\pi$ that controls how the requester assigns incoming workers to tasks, based on the current state.  This policy $\pi$ will map $S_n$ and $\Delta_n$ onto $\{0,1\}^K$, and $\pi(S_n,\Delta_n)$ will give the number of new workers assigned to each of the $K$ tasks, if the transition from $S_n$ to $S_{n+1}$ was caused by an arriving worker. Below we will constrain this to prevent assigning more than one task to a worker, and then later in the Lagrangian relaxation we will relax this constraint.

Formally, let $\Pi$ be the set of all measurable functions from $\mathbb{S}\times \mathbb{R}_+$ to $\{0,1\}^K$.
Then, let $|\cdot|$ return the sum of individual components of a vector, and define 
\begin{equation}
\Pi_0 = \{\pi\in \Pi: |\pi(s,\Delta)|\leq 1, \forall s\in \mathbb{S}, t\in \mathbb{R}\},
\end{equation}
where we have added the additional constraint to $\Pi$ that at most one task can be assigned to an incoming worker.
Only those $\pi\in\Pi_0$ will be feasible policies for the problem of interest, but we will consider the larger set of policies $\Pi$ to support later theoretical analysis.

The set of policies $\Pi_0$ allows not assigning an incoming worker to a task even when budget or time remains, but we will see below that this will still exhaust one unit of budget, and so optimal policies (or reasonable heuristics) will always assign incoming workers to tasks when possible.

Each $\pi\in \Pi$ defines a discrete time Markov chain $(S_n:n \in \{0,1,\ldots\})$ over the state space $\mathbb{S}$, whose transition kernel we will indicate by $\mathbb{P}^{\pi}(s'|s)$.
This transition kernel can be written as
{\small
\begin{align*}
\mathbb{P}^{\pi}(s'|s) = \int_{0}^{\infty}\!\!\mathbb{P}^{\pi}(s'|s,\Delta)\exp(-\Delta q(s))\,d\Delta,
\end{align*}
}
where we have defined
\begin{equation*}
q(s) = \mu\sum_{x=1}^Kw_x(s)+r.
\end{equation*}

Thus, to complete the description of this transition kernel, it is sufficient to describe $\mathbb{P}^{\pi}(S_{n+1}|S_{n},\Delta)$. For this description we suppose $S_n = (\alphav,\betav,t,\wv,\ell)$ and let $q = q(S_n)$.

When $t + \Delta_n\geq T$, the system has exceeded its time horizon, all outstanding tasks on which workers are currently working are canceled, and only the time is updated:
$S_{n+1} = (\alphav,\betav,t+\Delta_n,\zerov,\ell)$.


When $t + \Delta_n < T$, time remains and the next event can be either a worker arrival or a worker completion. A completion either outputs a positive result or a negative result.

A worker arrives with probability $r/q$.  If $\ell < U$, then the requester allocates this worker to a task, and the total number of arrivals is incremented:
$S_{n+1} = (\alphav, \betav, t+\Delta_n, \wv + \pi(S_n,\Delta_n), \ell+1)$.
If $\ell \ge U$, then the worker budget has been exceeded, and the requester cannot allocate the worker, so
$S_{n+1} = (\alphav, \betav, t+\Delta_n, \wv,\Delta_n), \ell)$.

For each $x\in[K]$, a worker completes this task $x$ and reports a positive label with probability 
$\frac{\alpha_x}{\alpha_x+\beta_x}\frac{\mu w_x}{q}$.
When this occurs, $S_{n+1} = (\alphav+\ev_x, \betav, t, \wv - \ev_x, \ell)$.

Similarly, a worker completes task $x$ and reports a negative label with probability 
$\frac{\beta_x}{\alpha_x+\beta_x}\frac{\mu w_x}{q}$.
When this occurs, $S_{n+1} = (\alphav, \betav + \ev_x, t, \wv - \ev_x, \ell)$.

This completely specifies the transition kernel for the discrete-time Markov chain that describes the continuous-time dynamics of both worker allocation and the posterior distribution on each $\theta_x$.

To model completion, we then define $\mathbb{S}_A = \{s\in \mathbb{S} :  \text{$t(s) \ge T$ or ($\ell(s)\ge U$ and $\wv(s)=\zerov$)}\}$ to be the set of states in which our time horizon has elapsed, or our worker budget has been exhausted and all allocated workers have finished their work. We then let $N = \inf\{n\geq 0: S_n \in \mathbb{S}_A \}$ be the number of events that occur up to and including the time when we reach a state in $\mathbb{S}_A$.
The posterior $\alphav(S_N),\betav(S_N)$ is the one with which the requester must make his/her final determination of the task labels, and so the expected reward under the posterior that s/he receives at time $t(S_N)$ is $R(\alphav(S_N),\betav(S_N))$.

Recall that our goal stated in section~\ref{sec:model} was to find the dynamic allocation policy $\pi$ of workers to tasks that maximizes the expected number of correctly classified tasks.  With the definition of this Markov chain in place, this overarching goal may be stated formally as solving
\begin{equation}\label{maxR}
\sup_{\pi\in\Pi_0}\mathbb{E}^{\pi}\left[R(\alphav(S_N),\betav(S_N)) \right].
\end{equation}

As a stochastic control problem, its solution may be characterized using stochastic dynamic programming.
We define the value function as 
\begin{equation}
V(s) = 
\sup_{\pi\in\Pi_0}\mathbb{E}^{\pi}\left[R(\alphav(S_N),\betav(S_N))| S_0 = s \right],
\end{equation}
and observe that the value function satisfies the dynamic programming recursion.  

First, for $s\in \mathbb{S}_A$, we have $V(s) = R(\alphav(s),\betav(s))$.
Then, for $s=(\alphav,\betav,t,\wv,\ell)\notin\mathbb{S}_A$ and $q=q(s)$, we have,
If $\ell < U$:
{\footnotesize
\begin{align}\label{bellman3a}
 V(s)  =  \Big(1-\text{exp}(-q&(T-t))\Big)\cdot \Big[\hspace{30mm}\text{ }\nonumber \\
r\int_{0}^{T-t}\!\max_zV (\alphav,\betav,& t\!+\!y,\wv+\ev_{z},\ell\!+1)e^{-qy}dy+ \nonumber \\
\sum_{x=1}^K\mu w_x \Big(\frac{\alpha_x}{\alpha_x+\beta_x} 
&\int_{0}^{T-t}V(\alphav\!+\!\ev_x,\beta,t\!+\!y,\wv\!-\!\ev_x,\ell)e^{-qy}dy\hspace{10mm}\text{ }\nonumber \\
+ \frac{\beta_x}{\alpha_x+\beta_x} 
\int_{0}^{T-t} & V(\alphav,\beta\!+\!\ev_x,t\!+\!y,\wv\!-\!\ev_x,\ell)e^{-qy}dy\Big)\Big]
\nonumber\\
+  \text{exp}(-q(T-t))&\Big\{R(\alphav,\betav)\Big\}.
\end{align}
}
If $\ell \geq U$:
{\footnotesize
\begin{align}\label{bellman3b}
 V(s)  =  \Big(1-\text{exp}(-q&(T-t))\Big)\cdot \Big[\hspace{30mm}\text{ }\nonumber \\
r\int_{0}^{T-t}\!V (\alphav,\betav,& t+y,\wv,\ell)e^{-qy}dy + \nonumber \\
\sum_{x=1}^K\mu w_x \Big(\frac{\alpha_x}{\alpha_x+\beta_x} 
&\int_{0}^{T-t}V(\alphav\!+\!\ev_x,\beta,t\!+\!y,\wv\!-\!\ev_x,\ell)e^{-qy}dy\hspace{10mm}\text{ }\nonumber \\
+ \frac{\beta_x}{\alpha_x+\beta_x} 
\int_{0}^{T-t} & V(\alphav,\beta\!+\!\ev_x,t\!+\!y,\wv\!-\!\ev_x,\ell)e^{-qy}dy\Big)\Big]
\nonumber\\
+  \text{exp}(-q(T-t))&\Big\{R(\alphav,\betav)\Big\}.
\end{align}
}
Moreover, knowing the value function reveals an optimal policy: an optimal policy is given by choosing the task $Z_\ell$ to assign to the next worker, in response to previous state $S_n$ at time $t(S_n)+\Delta_n$, to achieve the maximum in $\max_z V (\alphav(S_n),\betav(S_n),t(S_n)+\Delta_n,\wv(S_n)+\ev_z,\ell+1)$.

However, solving this dynamic program is computationally infeasible. For example, if we discretize the continuous time line to just 1000 intervals, when we have $K = 4$ tasks, the number of states to consider after $l = 20$ workers entering the system is $2.26*10^{11}$, which is too big to compute. Hence we seek to first provide an upper bound to the optimal value and then use the upper bound as the yardstick to measure how close a heuristic policy performs to an optimal policy.

\section{Upper Bound on the Bayes-Optimal Policy}
\label{sec:bound}
Although solving \eqref{maxR} directly using the stochastic dynamic program \eqref{bellman3a},\eqref{bellman3b} is computationally intractable, in this section we show how to obtain a computationally feasible upper bound on the value \eqref{maxR} using a Lagrangian relaxation. 

Recall we use $n$ to count events and $S_n$ is the state corresponding to the $n^{th}$ event. Define $n_\ell$ as the number of events that have occurred by the time of the $\ell^{th}$ arrival (inclusive), i.e., $n_\ell = \inf\{n: \ell(S_n) = \ell\}$. For $1\leq \ell \leq U$, define $a_\ell =\pi(S_{n_\ell-1},\Delta_{n_\ell-1})$, so that $a_{\ell,x}=1$ if the $\ell^{th}$ worker is assigned to task $x$. Therefore $\Pi_0$ satisfies $\Pi_0 = \{\pi\in \Pi: \mathbb{P}^{\pi}(\sum_{x=1}^K a_{\ell,x} \leq 1)=1 \ \forall \ell\}$. We also define a new subset of $\Pi$:
\begin{equation}\label{pi1}
\Pi_1 = \left\{\pi\in \Pi: \mathbb{E}^{\pi}\left[\sum_{x=1}^K a_{\ell,x}\right]\leq 1 \ \forall \ell\right\}.
\end{equation}
Under $\Pi_1$, we may assign a worker to more than one task along a particular sample path, as long as the {\it expected} number of tasks assigned to each worker is no larger than $1$.  Returning to our Markov chain model, we will observe that when a worker is assigned more than one task, the tasks are completed independently from each other. Observe that $\Pi_1$ includes a larger set of policies than $\Pi_0$, and that $\Pi$ includes a set that is larger still, i.e., $\Pi_0 \subseteq \Pi_1 \subseteq \Pi$.  Our result will use this relation.

To streamline notation, let $R = R(\alphav(S_N),\betav(S_N))$.  The optimal reward for the original crowdsourcing problem \eqref{maxR} is then,
\begin{equation}
R_0 = \sup_{\pi\in \Pi_0}\mathbb{E}^{\pi}\left[R\right],
\end{equation}
and the optimal reward under the larger class of policies $\Pi_1$ is
\begin{equation}
R_1 = \sup_{\pi\in \Pi_1}\mathbb{E}^{\pi}\left[R\right].
\end{equation}

Here we introduce a non-negative vector $\lambdav = \{\lambda_1,...,\lambda_U\}\geq \mathbf{0}$, which we use below as a Lagrange multiplier within a Lagrangian relaxation.  

More specifically, we will relax the constraint that each worker is assigned to at most one task, but will penalize the number of tasks assigned in a way that ensures that an upper bound holds regardless of what $\lambdav$ is (as long as it is componentwise-nonnegative).  This will then provide an upper bound on the optimal value of the original problem $R_0$, which can be made tighter by minimizing over $\lambdav$.  This upper bound can then be computed below via decomposition into $K$ small dynamic programs of fixed dimension that can be solved efficiently, even as $K$ grows large.

Our upper bound is provided in the following theorem.
\begin{theorem}\label{lemma1}
The term
\begin{equation}\label{uppB}
\inf_{\lambdav \geq \mathbf{0}} \sup_{\pi\in\Pi}\mathbb{E}^{\pi}\Big[R - \sum_{\ell=1}^U(\lambda_\ell\sum_xa_{\ell,x})\Big] + \sum_{\ell=1}^U \lambda_\ell 
\end{equation}
forms an upper bound to $R_0$.
\end{theorem}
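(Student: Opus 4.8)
The plan is to establish this as a weak‑duality inequality, walking up the chain of policy classes $\Pi_0\subseteq\Pi_1\subseteq\Pi$. First, since $\Pi_0\subseteq\Pi_1$, every policy feasible for the original problem is feasible for the relaxed one, so $R_0\le R_1$, and it suffices to bound $R_1$. The role of the intermediate class $\Pi_1$ is precisely that its defining constraint, $\mathbb{E}^{\pi}[\sum_x a_{\ell,x}]\le 1$ for every $\ell$, is the one being dualized by the multiplier $\lambda_\ell$.

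Next I would fix an arbitrary $\lambdav\ge\mathbf{0}$ and an arbitrary $\pi\in\Pi_1$, and write the identity
\[
\mathbb{E}^{\pi}[R]=\mathbb{E}^{\pi}\Big[R-\sum_{\ell=1}^U\lambda_\ell\sum_x a_{\ell,x}\Big]+\sum_{\ell=1}^U\lambda_\ell\,\mathbb{E}^{\pi}\Big[\sum_x a_{\ell,x}\Big].
\]
Since each $\lambda_\ell\ge 0$ and $\mathbb{E}^{\pi}[\sum_x a_{\ell,x}]\le 1$ by membership in $\Pi_1$, the final sum is at most $\sum_{\ell}\lambda_\ell$, whence $\mathbb{E}^{\pi}[R]\le \mathbb{E}^{\pi}[R-\sum_{\ell}\lambda_\ell\sum_x a_{\ell,x}]+\sum_{\ell}\lambda_\ell$. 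Taking the supremum over $\pi\in\Pi_1$ on the left, and then enlarging the feasible set from $\Pi_1$ to $\Pi$ on the right (which can only increase the supremum), gives
\[
R_1\le \sup_{\pi\in\Pi}\mathbb{E}^{\pi}\Big[R-\sum_{\ell=1}^U\lambda_\ell\sum_x a_{\ell,x}\Big]+\sum_{\ell=1}^U\lambda_\ell .
\]
Because $\lambdav\ge\mathbf{0}$ was arbitrary, I would take the infimum over $\lambdav$ on the right-hand side and combine with $R_0\le R_1$ to conclude. Note no minimax interchange or strong-duality argument is needed — only this one-directional (weak-duality) inequality.

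The one place requiring genuine care, rather than the three short inequalities above, is making every object in the statement well-defined on the enlarged class $\Pi$: when $\pi$ may assign zero or several tasks to an arriving worker, I must extend the transition kernel of Section~\ref{sec:DP} so that multiply-assigned tasks run as independent exponential completion clocks (as the text anticipates), check that the absorption time $N$ remains almost surely finite under any $\pi\in\Pi$, and verify that each $a_{\ell,x}$ and $R$ is measurable with $R\in[0,K]$ bounded, so that all the expectations appearing above are finite and the rearrangement is legitimate. Once these measurability and finiteness points are in place, the argument is exactly the chain $R_0\le R_1\le \sup_{\pi\in\Pi}\mathbb{E}^{\pi}[\,\cdot\,]+\sum_\ell\lambda_\ell$ for all $\lambdav\ge\mathbf{0}$, followed by the infimum over $\lambdav$.
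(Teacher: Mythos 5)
Your proposal is correct and follows essentially the same weak-duality argument as the paper: the chain $R_0 \le R_1 \le \sup_{\pi\in\Pi}\mathbb{E}^{\pi}\big[R - \sum_{\ell}\lambda_\ell\sum_x a_{\ell,x}\big] + \sum_{\ell}\lambda_\ell$ using $\Pi_0\subseteq\Pi_1\subseteq\Pi$, the nonnegativity of $\lambdav$, and the defining constraint of $\Pi_1$, followed by an infimum over $\lambdav$. The only difference is cosmetic (you bound $\mathbb{E}^{\pi}[R]$ from above rather than bounding the Lagrangian from below) plus your added remarks on measurability and boundedness of $R$, which the paper leaves implicit.
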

\begin{proof} 
\begin{align}
& \sup_{\pi\in\Pi}\mathbb{E}^{\pi}\Big[R - \sum_{\ell=1}^U(\lambda_\ell\sum_xa_{\ell,x})\Big] + \sum_{\ell=1}^U \lambda_\ell \nonumber \\
= & \sup_{\pi\in\Pi}\mathbb{E}^{\pi}\left[R - \sum_{\ell=1}^U\left(\lambda_\ell(\sum_{x}a_{\ell,x}-1)\right)\right] \nonumber \\
\geq & \sup_{\pi\in\Pi_1}\mathbb{E}^{\pi}\left[R - \sum_{\ell=1}^U\left(\lambda_\ell(\sum_{x}a_{\ell,x}-1)\right)\right] \nonumber \\
= & \sup_{\pi\in\Pi_1}\mathbb{E}^{\pi}\left[R \right] - \sum_{\ell=1}^U\lambda_\ell \left( \mathbb{E}^{\pi}[\sum_x a_{\ell,x}]-1 \right) \nonumber \\
\geq &\sup_{\pi\in\Pi_1} \mathbb{E}^{\pi}\left[R\right] \nonumber \\
\geq &\sup_{\pi\in\Pi_0} \mathbb{E}^{\pi}\left[R\right] \label{relaxp}
\end{align}
The first inequality is due to $\Pi_1 \subseteq \Pi$. The second inequality is because $\lambdav\geq \mathbf{0}$ and $\mathbb{E}^{\pi}[\sum_x a_{\ell,x}] \leq 1$ for any $\pi\in\Pi_1$. The third inequality is due to $\Pi_0 \subseteq \Pi_1$. Since \eqref{relaxp} holds true for any value of $\lambdav>\mathbf{0}$, we obtain Theorem~\ref{lemma1}.
\end{proof}

Calculating the supremum term in Theorem~\ref{lemma1} directly by dynamic programming is again computationally infeasible, because the state space of this dynamic program again is over all of $\mathbb{S}$, which has $3K+1$ dimensions.
We avoid this issue by decomposing this supremum term into the sum of the optimal values for $K$ dynamic programs, one for each task, each of which has a much more manageable $4$ dimensions.

To support this decomposition, we write the state $S_n\in\mathbb{S}$ for the whole system ($K$ tasks) as $S_n=(S_{n,1},\ldots,S_{n,K})$, where $S_{n,x}$ is the state for task $x$ when $n$ events have occurred, and includes $\alpha_x,\beta_x,t,w_x$, and the global counter $\ell$.  
We let $\mathbb{S}^{(x)} = \mathbb{R} \times \mathbb{R} \times \mathbb{R} \times \mathbb{N} \times \mathbb{N}$ be the set of possible values for this single-task state $S_{n,x}$.

Following a development identical to that in Section~\ref{sec:DP}, but for the single task $x$, we may define a space of policies $\pi^{(x)}$ that map the single-task state $S_{n,x}$ and the elapsed time since the last event $\Delta_{n}^{(x)}$ (counting worker arrivals over the whole system, and completions of task $x$ only) onto a binary decision of whether or not to allocate an incoming worker to task $x$,  so that $\pi^{(x)}(S_n, \Delta_n) \in \{0,1\}$.  Following this development, we construct $K$ independent Markov chains, one for each task, where each one is controlled by its respective single-task policy $\pi^{(x)}$.
We define $N$ as before, to be the first time that the time horizon elapses, or our worker budget has been exhausted and all outstanding workers have completed their work.
We then let $R_x = R(\alpha_x(S_{N}),\beta_x(S_N))$ be the reward obtained from this the single task at this time.

The following theorem shows that the bound in Theorem~\ref{lemma1} can be re-written in terms of the sum of solutions of single-task dynamic programming problems, where each obtains the reward $R_x$, and is penalized for assigning workers to its task.

\begin{theorem}
\label{theorem2}
\begin{equation}
\inf_{\lambdav\geq \mathbf{0}}\sum_{x=1}^K\sup_{\pi^{(x)}\in \Pi^{(x)}}\mathbb{E}^{\pi^{(x)}}\Big[R_x - \sum_{\ell=1}^U \lambda_\ell a_{\ell,x}\Big]+\sum_{\ell=1}^U \lambda_\ell
\end{equation}
forms an upper bound on $R_0$.
\end{theorem}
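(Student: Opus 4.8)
\medskip
\noindent\textbf{Proof plan.}
The plan is to reduce the claim to a single identity \emph{for each fixed} $\lambdav\ge\mathbf{0}$:
\begin{align*}
&\sup_{\pi\in\Pi}\mathbb{E}^{\pi}\Big[R-\sum_{\ell=1}^U\lambda_\ell\sum_{x}a_{\ell,x}\Big]\\
&\qquad=\sum_{x=1}^K\sup_{\pi^{(x)}\in\Pi^{(x)}}\mathbb{E}^{\pi^{(x)}}\Big[R_x-\sum_{\ell=1}^U\lambda_\ell a_{\ell,x}\Big];
\end{align*}
adding $\sum_\ell\lambda_\ell$ to both sides, taking $\inf_{\lambdav\ge\mathbf{0}}$, and invoking Theorem~\ref{lemma1} then yields the bound. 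For the upper-bound conclusion only ``$\le$'' is needed, since that already shows the expression in the statement dominates the (valid) bound of Theorem~\ref{lemma1}; the reverse inequality just shows the two bounds coincide. The starting point is additivity: by \eqref{eq:maxReward}, $R=\sum_{x=1}^K R_x$, and the penalty splits as $\sum_\ell\lambda_\ell\sum_x a_{\ell,x}=\sum_x\sum_\ell\lambda_\ell a_{\ell,x}$, so the bracketed objective on the left equals $\sum_x\big(R_x-\sum_\ell\lambda_\ell a_{\ell,x}\big)$, whence $\sup_{\pi\in\Pi}\mathbb{E}^{\pi}[\,\cdot\,]\le\sum_x\sup_{\pi\in\Pi}\mathbb{E}^{\pi}[R_x-\sum_\ell\lambda_\ell a_{\ell,x}]$ by ``sup of a sum $\le$ sum of sups''. (For the reverse, one runs single-task policies in parallel: since $\Pi$ carries no coupling constraint, letting the $x$-th coordinate of $\pi$ be $\pi^{(x)}$ evaluated at the task-$x$ marginal state and the current time gives an admissible $\pi\in\Pi$ under which the objective is exactly the sum.)

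The heart of the argument is then the per-task reduction $\sup_{\pi\in\Pi}\mathbb{E}^{\pi}[R_x-\sum_\ell\lambda_\ell a_{\ell,x}]=\sup_{\pi^{(x)}\in\Pi^{(x)}}\mathbb{E}^{\pi^{(x)}}[R_x-\sum_\ell\lambda_\ell a_{\ell,x}]$ for each $x$. Here ``$\ge$'' is immediate because any single-task policy extends to a member of $\Pi$. For ``$\le$'' I would use the conditional independence built into the model --- independent $\mathrm{Beta}$ priors, conditionally independent Bernoulli labels \eqref{eq:likelihood}, independent $\mathrm{Exp}(\mu)$ service times, and one shared Poisson arrival stream --- to observe that under \emph{any} $\pi\in\Pi$ the transition law of the coordinate $S_{n,x}$, together with the task-$x$ labels, depends only on $S_{n,x}$ and on whether the current arrival is routed to $x$, and never on the other coordinates; and that $R_x-\sum_\ell\lambda_\ell a_{\ell,x}$ is measurable with respect to the task-$x$ trajectory and routing decisions alone. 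Thus, from task $x$'s viewpoint, $\pi$ acts merely as a (history-dependent) policy for a single-task Markov decision problem whose Markov policies are exactly $\Pi^{(x)}$; since the horizon is a.s.\ finite (at most $U$ arrivals followed by finitely many completions, or the elapsing of $T$), Markov policies are optimal and the inequality follows.

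The step I expect to be the main obstacle is making this last reduction fully rigorous: one must verify that the task-$x$ coordinate of the joint continuous-time chain, observed at its own event epochs --- all worker arrivals together with completions of task $x$ only --- is genuinely a realization of the single-task chain constructed in Section~\ref{sec:bound}. This rests on memorylessness of the exponential clocks and on thinning/competing-exponentials: events irrelevant to task $x$ (completions of other tasks) leave $S_{n,x}$ unchanged and can be excised without perturbing the task-$x$ dynamics, while the time to the next task-$x$-relevant event remains $\mathrm{Exp}(r+\mu w_x)$. A secondary, bookkeeping-level point is that the ``time of last event'' component of the state is event-clock specific, so before composing single-task policies into a joint one (and conversely) one should note that an optimal $\pi^{(x)}$ may be taken to depend on $(S_{n,x},\Delta_n^{(x)})$ only through the wall-clock time $t(S_{n,x})+\Delta_n^{(x)}$ and the remaining components of $S_{n,x}$ --- a sufficient statistic for the single-task problem --- which makes the parallel construction above well defined.
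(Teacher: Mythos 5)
Your proposal is correct and takes essentially the same route as the paper's own proof: write $R=\sum_x R_x$, split the penalty, bound the supremum of the sum by the sum of per-task suprema, identify each per-task supremum over $\Pi$ with the supremum over the single-task class $\Pi^{(x)}$, and then invoke Theorem~\ref{lemma1} for each fixed $\lambdav\geq\mathbf{0}$ before taking the infimum. Your discussion of the per-task reduction (thinning of the exponential clocks and sufficiency of the task-$x$ marginal state) is in fact more careful than the paper's one-sentence justification of that same step.
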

\begin{proof}
For any $\lambdav\geq \mathbf{0}$:
\begin{align}
& \sup_{\pi\in\Pi}\mathbb{E}^{\pi}[R - \sum_{\ell=1}^U(\lambda_\ell\sum_x a_{\ell,x})]\nonumber \\
= & \sup_{\pi\in\Pi}\mathbb{E}^{\pi}[\sum_{x=1}^K R_x - \sum_{x=1}^K\sum_{\ell=1}^U \lambda_\ell a_{\ell,x}]\nonumber \\
= & \sup_{\pi\in\Pi}\mathbb{E}^{\pi}\Big[\sum_{x=1}^K \Big(R_x - \sum_{\ell=1}^U \lambda_\ell a_{\ell,x}\Big)\Big]\nonumber\\ 
= & \sup_{\pi\in\Pi}\sum_{x=1}^K \mathbb{E}^{\pi}\Big[R_x - \sum_{\ell=1}^U \lambda_\ell a_{\ell,x}\Big]\nonumber.
\end{align}
This is bounded above by,
\begin{align}\label{decom}
&\sum_{x=1}^K\sup_{\pi\in \Pi}\mathbb{E}^{\pi}\Big[R_x - \sum_{\ell=1}^U \lambda_\ell a_{\ell,x}\Big]\nonumber\\
= & \sum_{x=1}^K\sup_{\pi^{(x)}\in \Pi^{(x)}}\mathbb{E}^{\pi^{(x)}}\Big[R_x - \sum_{\ell=1}^U \lambda_\ell a_{\ell,x}\Big].
\end{align}
The equality at (\ref{decom}) is because $\sup_{\pi\in \Pi}\mathbb{E}^{\pi}\Big[R_x - \sum_{\ell=1}^U \lambda_\ell a_{\ell,x}\Big]$ depends only on $(\alpha_{t,x},\beta_{t,x},w_{t,x} : 0 \leq t \leq T)$, which is in turn governed by $\pi^{(x)}$. 
By Theorem~\ref{lemma1}, for any $\lambdav \geq \mathbf{0}$,
\begin{equation*}
\sum_{x=1}^K\sup_{\pi^{(x)}\in \Pi^{(x)}}\mathbb{E}^{\pi^{(x)}}\Big[R_x - \sum_{\ell=1}^U \lambda_\ell a_{\ell,x}\Big] + \sum_{\ell=1}^U \lambda_\ell
\end{equation*}
forms an upper bound on $R_0$. This hold for any $\lambdav\geq \mathbf{0}$, and so we have thus proved Theorem~\ref{theorem2}. 
\end{proof}

Since the state space is much smaller for a single-task system, we can use dynamic programming to solve for the supremum term 
\begin{equation}\label{singleR}
\sup_{\pi^{(x)}\in \Pi^{(x)}}\mathbb{E}^{\pi^{(x)}}\Big[R_x - \sum_{\ell=1}^U \lambda_\ell a_{\ell,x}\Big],
\end{equation}  
for any $\lambdav$ value. What remains in the computing of the upper bound is to solve for the infimum in Theorem \ref{theorem2}. We explore the convexity property of the problem follow by a binary search. Define $B(\lambda) = \sum_{x=1}^K\sup_{\pi^{(x)}\in \Pi^{(x)}}\mathbb{E}_0^{\pi^{(x)}}\Big[R_x - \sum_{\ell=1}^U \lambda_\ell a_{\ell,x}\Big]+\sum_{\ell=1}^U \lambda_\ell$, which is the upper bound derived in Theorem \ref{theorem2} without the infimum. First we prove that $B(\lambdav)$ is convex in $\lambdav$.
\begin{lemma}
$B(\lambdav)$ is convex in $\lambdav$.
\end{lemma}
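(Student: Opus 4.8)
The plan is to recognize $B(\lambdav)$ as a pointwise supremum (followed by a sum and the addition of an affine term) of functions that are individually affine in $\lambdav$, and then invoke the elementary fact that a pointwise supremum of affine functions is convex. So the argument is structural and requires no dynamic-programming machinery beyond what has already been set up.

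First I would fix a task $x\in[K]$ and a single-task policy $\pi^{(x)}\in\Pi^{(x)}$. The transition kernel of the single-task Markov chain constructed in Section~\ref{sec:bound} does not involve $\lambdav$ at all, so the law of the trajectory $(S_{n,x})_{n\ge0}$ under $\pi^{(x)}$ — and hence the scalars $\mathbb{E}_0^{\pi^{(x)}}[R_x]$ and $\mathbb{E}_0^{\pi^{(x)}}[a_{\ell,x}]$ for $\ell=1,\dots,U$ — are constants that do not depend on $\lambdav$. They are moreover finite, since $R_x\in[\tfrac12,1]$ and each $a_{\ell,x}\in\{0,1\}$. Consequently
\[
g_{\pi^{(x)}}(\lambdav) \;:=\; \mathbb{E}_0^{\pi^{(x)}}\Big[R_x - \sum_{\ell=1}^U \lambda_\ell a_{\ell,x}\Big] \;=\; \mathbb{E}_0^{\pi^{(x)}}[R_x] \;-\; \sum_{\ell=1}^U \lambda_\ell\,\mathbb{E}_0^{\pi^{(x)}}[a_{\ell,x}]
\]
is an affine function of $\lambdav$ (a linear functional plus a constant). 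Taking the supremum over $\pi^{(x)}\in\Pi^{(x)}$, the map $\lambdav\mapsto \sup_{\pi^{(x)}\in\Pi^{(x)}} g_{\pi^{(x)}}(\lambdav)$ is then a pointwise supremum of affine functions, hence convex; it is also real-valued on $\{\lambdav\ge\mathbf{0}\}$, being bounded below by the value attained by the policy that never assigns a worker and bounded above by $\sup_{\pi^{(x)}}\mathbb{E}_0^{\pi^{(x)}}[R_x]\le 1$. Summing these $K$ convex functions and adding the affine term $\sum_{\ell=1}^U\lambda_\ell$ preserves convexity, which yields the claim.

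There is essentially no deep obstacle here; the only points that deserve a sentence of care are (i) confirming that $\Pi^{(x)}$ and the measures $\mathbb{E}_0^{\pi^{(x)}}$ are defined independently of $\lambdav$, so that for each fixed $\pi^{(x)}$ the dependence on $\lambdav$ enters only through the explicit linear penalty $-\sum_\ell\lambda_\ell a_{\ell,x}$, and (ii) checking the boundedness just described, so that the affine functions and their supremum are genuine real-valued functions rather than taking the value $+\infty$. Both follow immediately from the construction in Section~\ref{sec:bound} and from $R_x\le 1$, $a_{\ell,x}\in\{0,1\}$.
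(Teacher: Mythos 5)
Your proof is correct and rests on the same core observation as the paper's: for a fixed single-task policy the objective is affine in $\lambdav$ (the law of the trajectory does not depend on $\lambdav$), so the supremum over policies is convex, and summing over tasks and adding the affine term $\sum_{\ell}\lambda_\ell$ preserves convexity. The paper verifies the two-point convexity inequality directly by plugging in a policy $\pi'$ that (implicitly) attains the supremum at the convex combination, whereas your supremum-of-affine-functions formulation reaches the same conclusion without needing that supremum to be attained, but this is a presentational difference rather than a different argument.
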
 
\begin{proof}
First note $\sum_{\ell=1}^U \lambda_\ell$ is convex in $\lambdav$. To prove $\sup_{\pi^{(x)}\in \Pi^{(x)}}\mathbb{E}^{\pi^{(x)}}\Big[R_x - \sum_{l=1}^U \lambda_l a_{l,x}\Big]$ is convex in $\lambdav$, pick any $\lambdav_1,\lambdav_2 \geq \mathbf{0}$ and $t \in [0,1]$ . Let 
\begin{equation}\label{suppi}
\pi' = \sup_{\pi^{(x)}\in \Pi^{(x)}}\mathbb{E}^{\pi^{(x)}}\Big[R_x - \sum_{l=1}^U (t\lambda_{1,l}+(1-t)
\lambda_{2,l}) a_{l,x} \Big]
\end{equation}
We have
\begin{align*}
& t \sup_{\pi^{(x)}\in \Pi^{(x)}}\mathbb{E}^{\pi^{(x)}}\Big[R_x - \sum_{l=1}^U \lambda_{1,l} a_{l,x}\Big]\\
+&(1-t) \sup_{\pi^{(x)}\in \Pi^{(x)}}\mathbb{E}^{\pi^{(x)}}\Big[R_x - \sum_{l=1}^U \lambda_{2,l} a_{l,x}\Big]\\
\geq & t\mathbb{E}^{\pi'}\Big[R_x - \sum_{l=1}^U \lambda_{1,l} a_{l,x}\Big]+(1-t)\mathbb{E}^{\pi'}\Big[R_x - \sum_{l=1}^U \lambda_{2,l} a_{l,x}\Big]\\
=&\mathbb{E}^{\pi'}\Big[R_x - \sum_{l=1}^U (t\lambda_{1,l}+(1-t)\lambda_{2,l}) a_{l,x}\Big]\\
=&\sup_{\pi^{(x)}\in \Pi^{(x)}}\mathbb{E}^{\pi^{(x)}}\Big[R_x - \sum_{l=1}^U (t\lambda_{1,l}+(1-t)
\lambda_{2,l}) a_{l,x} \Big].
\end{align*}
Hence $\sup_{\pi^{(x)}\in \Pi^{(x)}}\mathbb{E}^{\pi^{(x)}}\Big[R_x - \sum_{l=1}^U \lambda_l a_{l,x}\Big]$ is convex in $\lambdav$ for any $x\in [K]$, subsequently the sum of $\sup_{\pi^{(x)}\in \Pi^{(x)}}\mathbb{E}^{\pi^{(x)}}\Big[R_x - \sum_{l=1}^U \lambda_l a_{l,x}\Big]$ across $x$ is also convex in $\lambdav$. Thus we complete the proof that $B(\lambdav)$ is convex in $\lambdav$.
\end{proof}
With the convexity in $\lambdav$, we approximate $\lambdav'$ that achieves the infimum by setting $\lambdav' = \lambda'\times (1,,,1)$, and use a Fibonacci search to find the infimum. Here we constrain all the units of $\lambdav'$ to be the same for simpler computation, a tighter bound can be obtained by allowing each unit of $\lambdav'$ to vary and use sub-gradient descent to locate the infimum.

\section{Index Policy}\label{sec:index}
We introduce an index-based heuristic policy built on the Lagrangian relaxation we used in proving the upper bound. In this policy, we compute some $\lambda_x^*$ for each task $x$ based on its state $S_{n,x}$, such that $\lambda_x^*$ is the greatest value of $\lambda$ that the optimal policy will decide to hire the worker on state $S_{n,x}$ when solving \eqref{singleR} with $\lambdav = \lambda \mathbf{1}$, $\mathbf{1} = (1,\ldots,1)$. We then assign the incoming worker to the task with the highest $\lambda^*$. The intuition behind this policy is that in a single-task problem described by \eqref{singleR}, we view $\lambda_\ell$ as a cost of employing the $\ell^{th}$ worker. As $\lambda_\ell$ increases, our decision switches from hiring the worker to not hiring, where the switching point is at $\lambda^*_x$. Hence tasks with a high $\lambda^*_x$ are the tasks that are worth hiring more workers to work on. Below we present the algorithm in a more formal way.
\begin{algorithm}
\caption{Index Policy}
\begin{algorithmic}[1]
\WHILE{$\ell<U$}
\STATE For each $x \in \{1,\ldots,K\}$, compute $\lambda^*_x = \inf \{ \lambda \in \mathbb{R}_+: a_{\ell,x}^{ \lambda }= 1\}$, where $a_{x,\ell}^\lambda$ is the optimal decision from \eqref{singleR} when $\lambdav=\lambda\mathbf{1}$.
\STATE Let $x_* = \arg\max_x{\lambda^*_x}$. Break tie arbitrarily.
\STATE Assign task $x_*$ to the $\ell^{th}$ worker.
\ENDWHILE
\end{algorithmic}
\end{algorithm}
A useful technique to reduce the amount of computation is to put a cap on the total number of workers that can be assigned to a task, for this reduces the size of the state space of the dynamic program involved in solving for \eqref{singleR}. This additional cap does not affect the decision made by the Index Policy. Intuitively it is unlikely for any reasonable policy to assign all the $U$ number of workers to one task, so it is unlike for any task to get more than a certain number of workers assigned. One can check the validity of the cap by running simulations with the capped index policy, and see whether there are tasks that use all the workers that the cap allows. 
  
We show in section \ref{sec:comp} that this policy's performance is close to optimal.

\section{Numerical Experiment}\label{sec:comp}
For numerical experiment we concentrate on the case in which $T = \infty$. In this case we stop when we reach the maximum number of workers the budget allows. The Bellman's recursion to compute \ref{singleR} in the computation of upper bound for this special case is given in the supplement. In the first set of simulation study, we compare the performances of different policies on simulated data against the corresponding upper bounds. In the second set of simulation study, we use a real dataset for simulation.

\subsection{Simulation using simulated data}\label{simSim}
In the first set of simulations we evaluate the performance of the Index Policy using simulated data, and compared the total reward given in \eqref{eq:maxReward} generated by the Index Policy to the upper bound \ref{uppB}. We also compare the performance of the Index Policy to Optimistic Knowledge Gradient(OKG) method \citep{XichenQihangArch}, which is a state-of-art Bayesian allocation policy.  A round of simulated process includes generating either an arrival of worker or a completion of task based on the arrival rate $r=0.1$ and and completion rate $\mu=0.4$ with distributions specified in Section \ref{sec:model}. If it is a completion of task, we generate a label based on the posterior parameters. The process stops when we exhaust all the budget, i.e., the number of workers that are allowed to hire, and we get a reward as in \eqref{eq:maxReward}. We vary the number of tasks to be $K = 10,100,1000$, and set the budget to be $U = 1.2K$. We use a non-informative prior with $\alphav = \mathbf{1}$ and $\betav = \mathbf{1}$. We use a threshold $d_x = 0.5$ for all the tasks. For each value of $K = 10,100,1000$, we simulate the process 5000 times, and obtain a 95\% confidence interval for the simulated total reward. In Figure \ref{fig1} we show a Semi-log plot of the number of tasks $K$ against the average reward per task with the corresponding confidence intervals.
\begin{figure}[h!]
  \centering
      \includegraphics[width=0.35\textwidth]{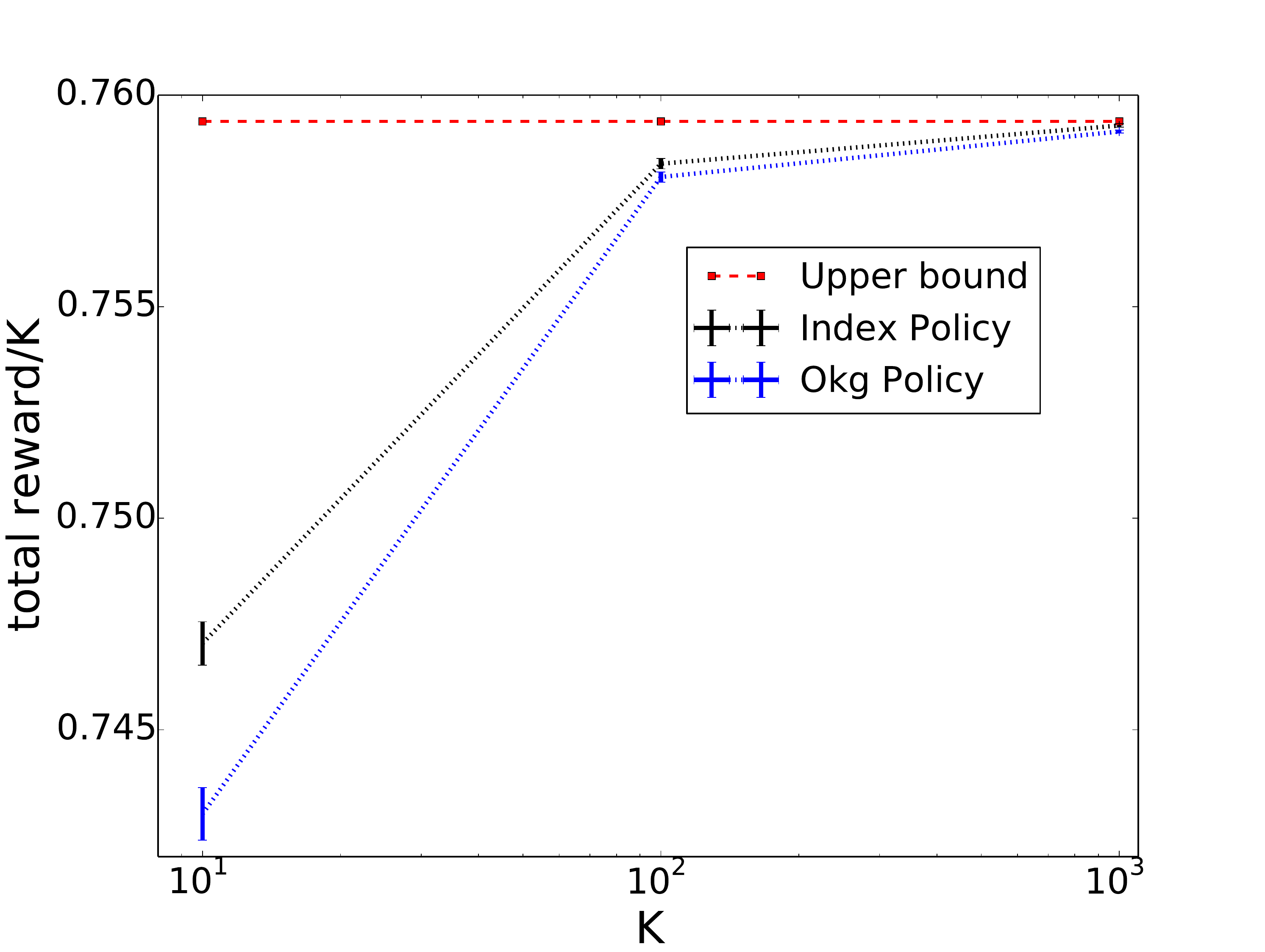}
  \caption{{\small Semi-log plot of $K$ against average per task reward ($R/K$) for $K = 10, 10^2, 10^3$.}}
\label{fig1}
\end{figure}

The performance of the index policy is consistently better than the OKG policy, especially for smaller number of tasks. Moreover, the gap between the upper bound and the simulated value gets smaller as $K$ increases, which demonstrates both that the upper bound is tight and the Index Policy performs close to an optimal policy as the number of tasks gets larger. 

We emphasize that the improved performance over OKG offered by our Index Policy is only one aspect of the contribution of this work. The other aspect is the tightness of the upper bound, especially for problem instances with many tasks. This tight upper bound for K=1200 shows that the index policy is within 0.03\% of optimal, and that continued algorithmic development will not provide significantly increased performance for large-scale crowdsourcing problems with characteristics matching this particular simulated dataset.  The ability to bound the improvement from continued algorithmic development for a particular problem instance, or class of problem instances, is useful for managers at companies that use crowdsourcing and wish to allocate engineering$\backslash$R$\&$D effort.

\subsection{Simulation using real data}
This set of simulations uses a real dataset, PASCAL RTE-1\citep{Snow2008}, which consists of 800 tasks, each comes with 10 labels obtained from crowdworkers and a gold standard label. (A gold standard label of a classification task is its true label.) We evaluate the performance of the Index Policy against the OKG policy, the Thompson Sampling \citep{Chapelle2011} and a widely used frequentist approach - the upper confidence bound(UCB) policy \citep{Auer2002}. (The specific version of UCB used here is UCB1-tuned.) The metric used to evaluate the performance is the accuracy score. More specifically, it is the number of correctly predicted labels over the total number of tasks. In each round of simulation process, we still simulate events (either arrival or completion) the same way as we did in Section \ref{simSim}. If the event is a completion, we read the most recent label for that task in the dataset. At the end of each process, predicted labels are compared against the gold standard labels. $d_x$ is still 0.5 for all tasks. For each value of $K = 10,100,750$, we simulate the process 5000 times, and obtain a 95\% confidence interval for the simulated total reward. 

Before simulating, we use the remaining 50 tasks from the RTE dataset as the `historical data' to estimate the parameters of the Beta prior, which is set to be the same across all tasks. This comes with an assumption that all tasks are homogeneous, hence a subset of them are representative of a larger population. We first obtain an estimate of $\theta_x$ for each of the 45 tasks, then use these empirical values of $\theta_x$ to fit a Beta distribution by Method of moments.
\begin{figure}[h!]
  \centering
      \includegraphics[width=0.4\textwidth]{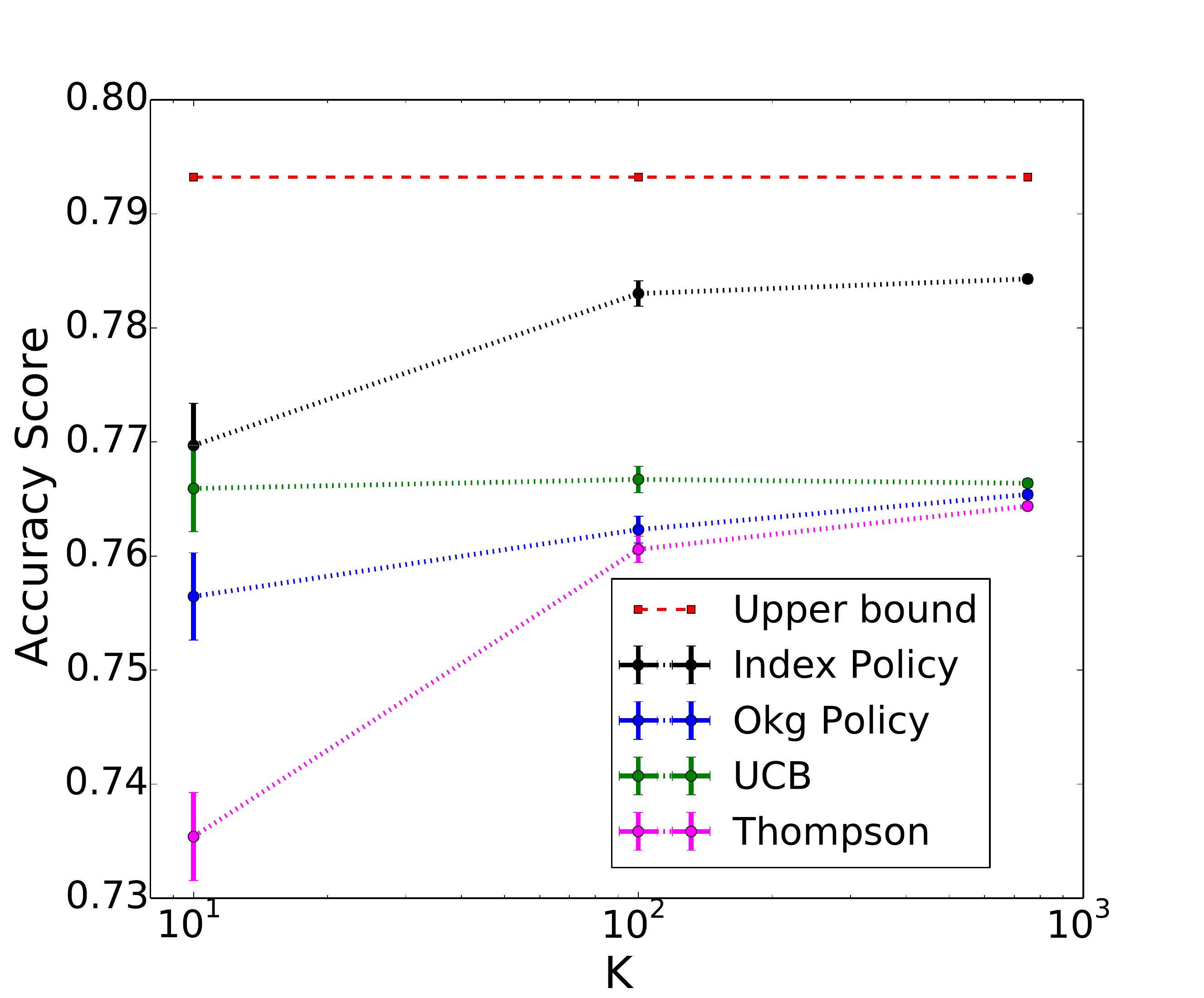}
  \caption{{\small Semi-log plot of $K$ against accuracy score for $K = 10, 100, 750$.}} 
\label{fig2}
\end{figure}
In Figure \ref{fig2} we show a Semi-log plot of the number of tasks $K$ against the accuracy score with the corresponding confidence intervals. All the Bayesian policies see an smaller optimality gap when $K$ gets larger. Index Policy performing consistently the best among all the policies. It is proven in \cite{XichenQihangArch} that OKG policy is consistent: it achieve 100\% accuracy almost surely when number of workers goes to infinity. We demonstrate numerically that the Index Policy performs better than the OKG policy. It is thus reasonable to anticipate that the Index Policy is not only consistent, but is asymptotically optimal when both the number of workers and the number of tasks goes to infinity, while keeping the ratio of the number of workers and the number of tasks constant.

\section{Conclusion}
\label{sec:conclusion}
We formulated the effort-allocation problem in crowdsourcing in a continuous time setting with budget constraint and time constraint. We also provide a computationally feasible upper bound on value of the Bayes-optimal policy using 
Lagrangian relaxation. Using the Lagrange multiplier used in proving the upper bound, we also derived an index-based policy and showed in numerical experiments that it performs close to optimal.

\section{Acknowledgment}
The authors were partially supported by NSF CAREER CMMI-1254298, NSF IIS-1247696, NSF CMMI-1536895, AFOSR FA9550-12-1-0200, AFOSR FA9550-15-1-0038, and the ACSF AVF (Atkinson Center for a Sustainable Future Academic Venture Fund).

\bibliographystyle{icml2015}
\bibliography{Reference}

\end{document}